\providecommand{\tabularnewline}{\\}
\theoremstyle{definition}
\newtheorem{defn}{\protect\definitionname}
\theoremstyle{plain}
\newtheorem{thm}{\protect\theoremname}
\theoremstyle{remark}
\newtheorem{rem}{\protect\remarkname}
\providecommand{\definitionname}{Definition}
\providecommand{\remarkname}{Remark}
\providecommand{\theoremname}{Theorem}
\begin{document}
\allowdisplaybreaks\global\long\def\dualquaternion#1{\underline{\boldsymbol{#1}}}%
\global\long\def\quaternion#1{\boldsymbol{#1}}%
\global\long\def\dq#1{\underline{\boldsymbol{#1}}}%
\global\long\def\quat#1{\boldsymbol{#1}}%
\global\long\def\mymatrix#1{\boldsymbol{#1}}%
\global\long\def\myvec#1{\boldsymbol{#1}}%
\global\long\def\mapvec#1{\boldsymbol{#1}}%
\global\long\def\dualvector#1{\underline{\boldsymbol{#1}}}%
\global\long\def\dual{\varepsilon}%
\global\long\def\dotproduct#1{\langle#1\rangle}%
\global\long\def\norm#1{\left\Vert #1\right\Vert }%
\global\long\def\mydual#1{\underline{#1}}%
\global\long\def\hamilton#1#2{\overset{#1}{\operatorname{\mymatrix H}}\left(#2\right)}%
\global\long\def\hamifour#1#2{\overset{#1}{\operatorname{\mymatrix H}}_{4}\left(#2\right)}%
\global\long\def\hami#1{\overset{#1}{\operatorname{\mymatrix H}}}%
\global\long\def\tplus{\dq{{\cal T}}}%
\global\long\def\getp#1{\operatorname{\mathcal{P}}\left(#1\right)}%
\global\long\def\getd#1{\operatorname{\mathcal{D}}\left(#1\right)}%
\global\long\def\swap#1{\text{swap}\{#1\}}%
\global\long\def\imi{\hat{\imath}}%
\global\long\def\imj{\hat{\jmath}}%
\global\long\def\imk{\hat{k}}%
\global\long\def\real#1{\operatorname{\mathrm{Re}}\left(#1\right)}%
\global\long\def\imag#1{\operatorname{\mathrm{Im}}\left(#1\right)}%
\global\long\def\imvec{\boldsymbol{\imath_{m}}}%
\global\long\def\vector{\operatorname{vec}_{6}}%
\global\long\def\mathpzc#1{\fontmathpzc{#1}}%
\global\long\def\cost#1#2{\underset{\text{#2}}{\operatorname{\text{cost}}}\left(\ensuremath{#1}\right)}%
\global\long\def\closedballset{\ensuremath{\mathbb{B}}}%

\global\long\def\complexset{\ensuremath{\mathbb{C}}}%

\global\long\def\realset{\ensuremath{\mathbb{R}}}%

\global\long\def\rationalset{\ensuremath{\mathbb{Q}}}%

\global\long\def\integerset{\ensuremath{\mathbb{Z}}}%

\global\long\def\naturalset{\ensuremath{\mathbb{N}}}%

\global\long\def\SE#1{\ensuremath{SE(#1)}}%

\global\long\def\SO#1{\ensuremath{SO(#1)}}%

\global\long\def\SU#1{\ensuremath{SU(#1)}}%

\global\long\def\liealgebraSE#1{\ensuremath{\mathfrak{se}\left(#1\right)}}%

\global\long\def\liealgebraSO#1{\ensuremath{\mathfrak{so}\left(#1\right)}}%

\global\long\def\liealgebraSU#1{\ensuremath{\mathfrak{su}\left(#1\right)}}%

\global\long\def\dualmatrix#1{\left\llbracket \left\llbracket #1\right\rrbracket \right\rrbracket }%
\global\long\def\quatset{\ensuremath{\mathbb{H}}}%

\global\long\def\purequatset{\mathbb{H}_{p}}%

\global\long\def\unitquatgroup{\mathrm{Spin}\left(3\right)}%

\global\long\def\unitquatset{\mathbb{S}^{3}}%

\global\long\def\dualquatset{\mathcal{H}}%
{} %

\global\long\def\unitdualquatgroup{\mathrm{Spin}\left(3\right)\ltimes\mathbb{R}^{3}}%
\global\long\def\unitdualquatset{\dq{\mathcal{S}}}%

\global\long\def\junta{q}%

\global\long\def\twist{\dq{\xi}}%

\global\long\def\diag#1{\operatorname{diag}\left(#1\right)}%

\global\long\def\diagterm#1{\mathfrak{D}\negmedspace\left\{  \negthinspace#1\negthinspace\right\}  }%

\global\long\def\jacobd{\mymatrix J}%

\global\long\def\jacobde{\mymatrix J}%

\global\long\def\dqxma{\dq x_{N}}%
\global\long\def\dqxmader{\dot{\dq x}_{N}}%

\global\long\def\error#1{\tilde{#1}}%

\global\long\def\errorvardq{\tilde{\dq z}}%
\global\long\def\errorvarquat{\tilde{\quat z}}%

\global\long\def\dqxmb{\tilde{\dq x}}%
\global\long\def\dqxmbder{\dot{\tilde{\dq x}}}%

\global\long\def\dqxmc{\dq{\tilde{x}}}%
\global\long\def\dqxmcder{\dot{\dq{\tilde{x}}}}%

\global\long\def\dqxmfinal{\dq x}%
\global\long\def\dqxmfinalder{\dot{\dq x}}%

\global\long\def\singulardist#1{\ifthenelse{\equal{#1}{f}}{f_{\sigma}}{\ifthenelse{\equal{#1}{max}}{\sigma_{\mathrm{far}}}{\ifthenelse{\equal{#1}{size}}{\sigma_{\mathrm{region}}}{\dq v_{s}}}}}%

\global\long\def\singmin#1{\sigma_{m}\negmedspace\left\{  \negthinspace#1\negthinspace\right\}  }%

\global\long\def\singvector#1{\myvec u_{m}\negmedspace\left\{  \negthinspace#1\negthinspace\right\}  }%

\global\long\def\puredualquatset{\mathcal{H}_{p}}%

\global\long\def\inner#1#2{\langle#1,#2\rangle}%

\global\long\def\vectorinv{\underline{\operatorname{vec}}_{6}}%

\global\long\def\teorientation#1{\operatorname{\mathcal{\mathscr{O}}\negmedspace}\left(#1\right) }%

\global\long\def\tetranslation#1{\operatorname{\mathscr{T}\negmedspace}\left(#1\right) }%

\global\long\def\perfvariable{{\gamma} }%

\global\long\def\perfrotation{{\gamma_{_{\mathcal{O}}}} }%

\global\long\def\perftranslation{{\gamma_{_{\mathcal{T}}}}}%

\global\long\def\KT{\kappa_{{\text{\tiny\ensuremath{\mathcal{T}}}}}}%

\global\long\def\KO{\kappa_{{\text{\tiny\ensuremath{\mathcal{O}}}}}}%

\global\long\def\vectorquat{\operatorname{vec}_{3}}%

\global\long\def\vectorquatinv{\underline{\operatorname{vec}}_{3}}%

\global\long\def\kinerotation{\quat r}%
\global\long\def\kineposition{\quat p}%

\global\long\def\errorrotation{\tilde{\quat r}}%
\global\long\def\errorposition{\tilde{\quat p}}%

\global\long\def\noiseaddvariablea{v}%
\global\long\def\noiseaddvariableb{w}%
\global\long\def\noiseadd{\dq{\noiseaddvariablea}_{\noiseaddvariableb}}%
\global\long\def\noiseaddposition{\quat{\noiseaddvariablea}_{\noiseaddvariableb}'}%
\global\long\def\noiseaddrotation{\quat{\noiseaddvariablea}_{\noiseaddvariableb}}%

\global\long\def\noiseposevariablea{v}%
\global\long\def\noiseposevariableb{c}%
\global\long\def\noisepose{\dq{\noiseaddvariablea}_{\noiseposevariableb}}%
\global\long\def\noiseposeeff{\bar{\dq{\noiseaddvariablea}}_{\noiseposevariableb}}%
\global\long\def\noiseposeposition{\quat{\noiseaddvariablea}_{\noiseposevariableb}'}%
\global\long\def\noiseposerotation{\quat{\noiseaddvariablea}_{\noiseposevariableb}}%

\global\long\def\noisedesiredvariablea{v}%
\global\long\def\noisedesiredvariableb{d}%

\global\long\def\noisedesired{\dq{\noisedesiredvariablea}_{\noisedesiredvariableb}}%
\global\long\def\noisedesiredposition{\quat{\noisedesiredvariablea}_{\noisedesiredvariableb}'}%
\global\long\def\noisedesiredrotation{\quat{\noisedesiredvariablea}_{\noisedesiredvariableb}}%
\global\long\def\noisedesiredbar{\bar{\dq{\noisedesiredvariablea}}_{\noisedesiredvariableb}}%

\begin{frontmatter}{}

\title{Robust $H_{\infty}$ kinematic control of manipulator robots using
dual quaternion algebra}

\author[tum]{Luis F. C. Figueredo\corref{cor1}}

\author[uom]{Bruno V. Adorno\corref{cor1}\corref{cor2}}

\author[unb]{Jo\~{a}o Y. Ishihara}

\cortext[cor1]{The first two authors contributed equally to this work.}

\cortext[cor2]{Corresponding author. \emph{Email:} bruno.adorno@manchester.ac.uk}

\address[tum]{Munich School of Robotics and Machine Intelligence, Technische Universit\"{a}t
M\"{u}nchen (TUM), 80797 Munich, Germany }

\address[uom]{Department of Electrical and Electronic Engineering, School of Engineering,
The University of Manchester, Sackville Street Manchester, M13 9PL
United Kingdom }

\address[unb]{Department of Electrical Engineering, University of Bras\'{i}lia (UnB)
\textendash{} 70910-900, Bras\'{i}lia, DF, Brazil}
\begin{abstract}
This paper proposes a robust dual-quaternion based $H_{\infty}$ task-space
kinematic controller for robot manipulators. To address the manipulator
liability to modeling errors, uncertainties, exogenous disturbances,
and their influence upon the kinematics of the end-effector pose,
we adapt $H_{\infty}$ techniques\textemdash suitable only for additive
noises\textemdash to unit dual quaternions. The noise to error attenuation
within the $H_{\infty}$ framework has the additional advantage of
casting aside requirements concerning noise distributions, which are
significantly hard to characterize within the group of rigid body
transformations. Using dual quaternion algebra, we provide a connection
between performance effects over the end-effector trajectory and different
sources of uncertainties and disturbances while satisfying attenuation
requirements with minimum instantaneous control effort. The result
is an easy-to-implement closed form $H_{\infty}$ control design criterion.
The performance of the proposed strategy is evaluated within different
realistic simulated scenarios and validated through real experiments.
\end{abstract}
\begin{keyword}
$H_{\infty}$ control \sep kinematic control \sep unit dual quaternions
\sep robust control
\end{keyword}

\end{frontmatter}{}

\section{Introduction}

To ensure adequate performance, robot task-space kinematic controllers
must ensure robustness against modeling errors, uncertainties, and
exogenous disturbances that affect the end-effector pose. To cope
with the challenges that arise from the pose description and possible
representation singularities, the coupled translation and rotation
kinematics can be modeled using non-minimal representations such as
homogeneous transformation matrices (HTM) and unit dual quaternions.
The unit dual quaternion is a non-singular representation for rigid
transformations that is more compact, efficient and less computationally
demanding than HTM \citep{1998_Aspragathos_Dimitros_TSMC}. In addition,
dual quaternion algebra can represent rigid motions, twists, wrenches
and several geometrical primitives in a straightforward way, which
is useful when describing geometrical tasks directly in the task-space
\citep{Marinho2018}. Moreover, control laws are defined directly
over a vector field, eliminating the need to extract additional parameters
or to design matrix-based controllers.

Thanks to those advantages, there has been an increasing interest
in the study of kinematic representation and control in dual quaternion
space. Those works comprise rigid motion stabilization, tracking,
and multiple body coordination \citep{2008_Han_Wei_Li_IJAC,2012_Wang_Yu_Lin_TR,2013_Wang_Yu_SCL,Mas2017},
and kinematic control of manipulators with single and multiple arms
and human-robot interaction \citep{2010_Adorno_IROS,2013_Figueredo_Adorno_Ishihara_Borges_ICRA,2015_Adorno_Bo_Fraisse_ROB}.

Despite the developments on robot control using dual quaternion algebra,
there is still a gap in existing literature concerning the influence
of control parameters, uncertainties, and disturbances over tracking
robustness and performance when the trajectory is represented by unit
dual quaternions. 

\subsection{\textcolor{black}{Statement of contributions}}

We propose a robust dual-quaternion based $H_{\infty}$ task-space
kinematic controller for manipulators. The new method directly connects
different sources of uncertainties and disturbances to their corresponding
performance effects over the end-effector trajectory in dual quaternion
space. The controller explicitly addresses the influence of such disturbances
over the end-effector pose, in the $H_{\infty}$ sense, which does
not require detailed knowledge about the statistical distribution
of disturbances. This is paramount as those distributions are significantly
hard to characterize within the group $\text{Spin}(3)\ltimes\mathbb{R}^{3}$
of unit dual quaternions (or even $\SE 3$). Using dual quaternion
algebra, we derive easy-to-implement closed form $H_{\infty}$ control
and tracking strategies at the end-effector level that incorporate
robustness requirements, disturbance attenuation and performance properties
over the pose kinematics, while minimizing the required control effort.
In summary, the contributions to the state of the art are:
\begin{enumerate}
\item Introduction of novel geometrical description of disturbances within
the space of unit dual quaternions;
\item Development of an easy-to-implement, closed form $H_{\infty}$ controller
for end-effector trajectory tracking.
\end{enumerate}

\section{Preliminaries\label{sec:Mathematical-Background}}


The algebra of quaternions is generated by the basis elements $1,\imi,\imj,$
and $\imk$ and a distributive multiplication operation satisfying
$\imi^{2}{=}\imj^{2}{=}\imk^{2}{=}\imi\imj\imk{=}-1$, yielding the
set 
\[
\quatset\triangleq\smash{\left\{ \eta+\mu_{1}\imi+\mu_{2}\imj+\mu_{3}\imk\,:\,\eta,\mu_{1},\mu_{2},\mu_{3}\in\realset\right\} }.
\]
 An element $\quat h=\eta+\mu_{1}\imi+\mu_{2}\imj+\mu_{3}\imk\in\mathbb{H}$
may be decomposed into real and imaginary components $\real{\quat h}\triangleq\eta$
and $\imag{\quat h}\triangleq\mu_{1}\imi+\mu_{2}\imj+\mu_{3}\imk$,
such that $\quat h=\real{\quat h}+\imag{\quat h}$.

Quaternion elements with real part equal to zero belong to the set
of pure quaternions $\mathbb{H}_{p}\triangleq\left\{ \quat h\in\quatset\ :\real{\quat h}\right.$
$\left.=0\right\} $, and are equivalent to vectors in $\realset^{3}$
under the addition operation and the bijective operator $\vectorquat:\quatset_{p}{\rightarrow}\mathbb{R}^{3}$,
such that $\myvec{\mu}{=}\mu_{1}\imi+\mu_{2}\imj+\mu_{3}\imk$ yields
$\vectorquat\quat{\mu}{=}\begin{bmatrix}\mu_{1} & \mu_{2} & \mu_{3}\end{bmatrix}^{T}$.
The inverse mapping is given by the operator $\vectorquatinv$.

The set of unit quaternions is defined as $\unitquatset\triangleq\{\quat h\in\quatset\,:\,\norm{\quat h}=1\}$,
where $\norm{\quat h}\triangleq\sqrt{\quat h\quat h^{\ast}}=\sqrt{\quat h^{\ast}\quat h}$
is the quaternion norm and $\quat h{}^{\ast}\triangleq\real{\quat h}-\imag{\quat h}$
is the conjugate of $\quat h$. The set $\unitquatset$, together
with the multiplication operation, forms the Lie group of unit quaternions,
$\unitquatgroup$, whose identity element is $1$ and the inverse
of any element $\quat h\in\unitquatgroup$ is $\quat h^{\ast}$. An
arbitrary rotation angle $\phi\in\realset$ around the rotation axis
$\quat n\in\quatset_{p}\cap\unitquatset$, with $\quat n=n_{x}\imi+n_{y}\imj+n_{z}\imk$,
is represented by $\quat r=\cos(\phi/2)+\sin(\phi/2)\quat n\in\unitquatgroup$
\citep{Selig2005}.

The complete rigid body displacement, in which translation and rotation
are coupled, is similarly described using dual quaternion algebra
\citep{Selig2005}. The dual quaternion set is given by the set 
\[
\dualquatset\triangleq\{\quat h+\dual\quat h'\,:\,\quat h,\quat h'\in\quatset,\ \dual^{2}=0,\ \dual\neq0\},
\]
where $\dual$ is the dual unit. Given $\dq h=\quat h+\dual\quat h'\in\mathcal{H}$,
its norm is defined as $\norm{\dq h}\triangleq\sqrt{\dq h\dq h^{*}}=\sqrt{\dq h^{*}\dq h}$
and the element $\dq h^{*}\triangleq\quat h^{*}+\dual\quat h'^{*}$
is the conjugate of $\dq h$. Under multiplication, the subset of
\emph{unit }dual quaternions $\unitdualquatset\triangleq\{\dq h\in\dualquatset\,:\,\norm{\dq h}=1\}$
forms the Lie group $\text{Spin}(3)\ltimes\mathbb{R}^{3}$, whose
identity element is $1$ and the group inverse of $\dq x\in\text{Spin}(3)\ltimes\mathbb{R}^{3}$
is $\dq x^{\ast}$ \citep{Selig2005}. An arbitrary rigid displacement
defined by a translation $\kineposition\in\quatset_{p}$ followed
by a rotation $\kinerotation\in\unitquatset$ is represented in $\text{Spin}(3)\ltimes\mathbb{R}^{3}$
by the element $\dq x=\kinerotation+(1/2)\dual\kineposition\kinerotation$.

The first order kinematic equation of a rigid body motion is described
by
\begin{equation}
\dot{\dq x}=\smash{\frac{1}{2}}\twist\dq x,\label{eq:def:DQ kinematics}
\end{equation}
where $\twist=\quat{\omega}+\dual\left(\dot{\kineposition}+\kineposition\times\quat{\omega}\right)$
is the twist in the inertial frame and $\quat{\omega},\dot{\kineposition}\in\mathbb{H}_{p}$
are the angular and linear velocities, respectively. The twist $\dq{\xi}$
belongs to the set of pure dual quaternions, defined as $\puredualquatset\triangleq\{\left(\quat h+\dual\quat h'\right)\in\dualquatset\,:\,\real{\quat h}{=}\real{\quat h'}{=}0\}$,
which is equivalent to vectors in $\mathbb{R}^{6}$ under the addition
operation and the bijective operator $\vector:\puredualquatset\rightarrow\mathbb{R}^{6}$,
such that $\twist=(\xi_{1}\imi+\xi_{2}\imj+\xi_{3}\imk)+\dual(\xi_{4}\imi+\xi_{5}\imj+\xi_{6}\imk)$
yields $\vector\twist=\begin{bmatrix}\xi_{1} & {\cdots} & \xi_{6}\end{bmatrix}^{T}$.
The inverse mapping is denoted by $\vectorinv:\realset^{6}\rightarrow\puredualquatset$.

\subsection{Forward Kinematics of Serial Manipulators\label{sec:Kinematics-Modeling-And}}

The rigid transformation from the robot's fixed base to its end-effector
pose\textemdash i.e., its forward kinematics\textemdash is described
by $\dqxma\left(\myvec q\right)=\dq x_{1}^{0}\dq x_{2}^{1}\dots\dq x_{n}^{n-1}$,
with $\myvec q{=}\begin{bmatrix}q_{1} & {\cdots} & q_{n}\end{bmatrix}^{T}$,
where $\dq x_{i+1}^{i}\triangleq\dq x_{i+1}^{i}\left(q_{i+1}\right)\in\unitdualquatgroup$
represents the rigid transformation between the extremities of links
$i$ and $i+1$ and is a function of joint configuration $\junta_{i+1}\in\mathbb{R}$.

The differential forward kinematics, which describes the mapping between
the joints velocities and the end-effector (generalized) velocity
is given by \citep{2011_Adorno_THESIS}
\begin{align}
\dqxmader & =\frac{1}{2}\dq{\xi}_{N}\dqxma=\frac{1}{2}{\sum}_{i=1}^{n}\dq{\jmath}_{i}\dot{\junta}_{i}\dqxma,\label{eq:system:DQ Diif_FKM:continuous}\\
\dq{\jmath}_{i} & =2\dq x_{i-1}^{0}\frac{d\dq x_{i}^{i-1}}{dq_{i}}\left(\dq x_{i}^{i-1}\right)^{*}\dq x_{0}^{i-1}.
\end{align}

\section{Uncertainties and Exogenous Disturbances}


In practice, the end-effector trajectory is likely to be influenced
by different sources of exogenous disturbances and inaccuracies in
the manipulator's geometrical parameters, resulting in an uncertain
differential forward kinematics. To improve accuracy and control performance,
the influence of those uncertainties and disturbances over the system
must be explicitly regarded. Thus, we investigate two sources of disturbances,
namely twist and pose uncertainties.

Twist uncertainties may be caused by exogenous disturbances that directly
influence the end-effector \emph{velocity}, such as unmodeled time-varying
uncertainties, aerodynamic forces acting on non-rigid manipulators,
and the effects of discrete implementations of controllers designed
in continuous time. The differential forward kinematics under the
influence of a twist disturbance $\dq v_{w}$ is modeled by\footnote{Notice that, since $\noiseadd{\in}\dualquatset_{p}$ is in the Lie
algebra of $\text{Spin}(3)\ltimes\mathbb{R}^{3}$, $\dqxmader$ belongs
to the tangent space of $\unitdualquatgroup$ at $\dq x_{N}$.}
\begin{equation}
\dqxmader=\frac{1}{2}{\sum}_{i=1}^{n}\dq{\jmath}_{i}\dot{\junta}_{i}\dqxma+\frac{1}{2}\noiseadd\dqxma.\label{eq:Diif_FKM:vd_vw}
\end{equation}

Pose uncertainties, which are related to the end-effector \emph{pose},
may also arise from unforeseen inaccuracies within model geometric
parameters and time-varying uncertainties, but also comprise inaccuracies
in the location of the reference frame. They can also appear due to
the unmodeled parameters of the dynamic model, e.g., the gravity effect
on the robot. Because they affect the forward kinematics, pose uncertainties
can be mapped to \eqref{eq:system:DQ Diif_FKM:continuous}\textendash \eqref{eq:Diif_FKM:vd_vw}
as
\begin{equation}
\dqxmfinal=\dqxma\dq c,\label{eq:system:vc_in_DQ}
\end{equation}
where $\dq c\in\unitdualquatgroup$ and $\dqxmfinal$ denotes the
real pose of the disturbed end-effector. The time derivative of \eqref{eq:system:vc_in_DQ}
yields
\begin{align*}
\dqxmfinalder & =\dqxmader\dq c+\dqxma\dot{\dq c}=\frac{1}{2}{\sum}_{i=1}^{n}\dq{\jmath}_{i}\dot{\junta}_{i}\dqxmfinal+\frac{1}{2}\noiseadd\dqxmfinal+\frac{1}{2}\dqxmfinal\noiseposeeff,
\end{align*}
where $\noiseposeeff\in\puredualquatset$ is the twist related to
$\dot{\dq c}$, but expressed in the local frame; that is, $\dot{\dq c}=\left(1/2\right)\dq c\noiseposeeff$.
Since the disturbance $\noiseposeeff$ can be expressed in the inertial
frame by means of the norm-preserving transformation $\noiseposeeff=\dqxmfinal^{*}\noisepose\dqxmfinal$,
the actual differential forward kinematics, under twist and pose uncertainties,
is described by
\begin{equation}
\dqxmfinalder=\smash{\frac{1}{2}}{\sum}_{i=1}^{n}\dq{\jmath}_{i}\dot{\junta}_{i}\dqxmfinal+\smash{\frac{1}{2}}\noiseadd\dqxmfinal+\smash{\frac{1}{2}}\noisepose\dqxmfinal.\label{eq:system:final differential kinematics}
\end{equation}

\subsection{Tracking Error Definition}

Given a desired differentiable pose trajectory $\dq x_{d}\left(t\right)\in\unitdualquatgroup$,
we seek to guarantee internal stability and tracking performance in
terms of the noise-to-output influence over the end-effector trajectory.
From \eqref{eq:def:DQ kinematics}, $\dq x_{d}\left(t\right)$ satisfies
the first order kinematic equation
\begin{equation}
\dot{\dq x}_{d}=\smash{\frac{1}{2}}\twist_{d}\dq x_{d}.\label{eq:def:DESIRED DQ Kinematics}
\end{equation}
We define the spatial difference in $\text{Spin}(3)\ltimes\mathbb{R}^{3}$
as
\begin{equation}
\error{\dq x}\triangleq\dqxmfinal\dq x_{d}^{\ast}=\error{\quat r}+\dual\smash{\frac{1}{2}}\error{\quat p}\error{\quat r},\label{eq:error:NonDisturbed:spatial error}
\end{equation}
where $\error{\quat r}=\quat r\quat r_{d}^{\ast}$ denotes the orientation
error in $\unitquatgroup$ given the desired orientation $\quat r_{d}$,
and $\error{\quat p}=\quat p-\error{\quat r}\quat p_{d}\error{\quat r}^{\ast}$
denotes the translational error in $\purequatset$ given the desired
position $\quat p_{d}$.

From \eqref{eq:system:final differential kinematics} and \eqref{eq:def:DESIRED DQ Kinematics},
the error kinematics is given by
\begin{align}
\dot{\error{\dq x}} & =\dot{\dq x}\dqxmfinal{}_{d}^{\ast}+\dqxmfinal\dot{\dq x}_{d}^{\ast}=\frac{1}{2}\left(\sum_{i=1}^{n}\dq{\jmath}_{i}\dot{\junta}_{i}+\noiseadd+\noisepose\right)\error{\dq x}-\frac{1}{2}\error{\dq x}\twist_{d}\nonumber \\
 & =\frac{1}{2}\left(\vectorinv\left(\mymatrix J\dot{\myvec{\junta}}\right)+\noiseadd+\noisepose\right)\error{\dq x}-\frac{1}{2}\error{\dq x}\twist_{d},\label{eq:tracking:error_dynamics:jacobian}
\end{align}
where $\myvec{\junta}=\smash{\begin{bmatrix}\junta_{1} & \cdots & \junta_{n}\end{bmatrix}^{T}}$
is the measured vector of joint variables and $\mymatrix J=\begin{bmatrix}\vector\dq{\jmath}_{1} & \cdots & \vector\dq{\jmath}_{n}\end{bmatrix}$
is the analytical Jacobian that maps the joints velocities $\dot{\myvec q}$
to the (undisturbed) twist $\vector\dq{\xi}_{N}$ of the end-effector.

From the spatial difference \eqref{eq:error:NonDisturbed:spatial error},
we define a right invariant dual quaternion error function\footnote{In order to prevent the unwinding phenomenon, see Remark~\ref{rem:addressing_the_unwinding}.}
\begin{equation}
\errorvardq\triangleq1-\error{\dq x}=\errorvarquat+\dual\errorvarquat'\label{eq:error:NonDisturbed:new metrics}
\end{equation}
with dynamics described by $\dot{\errorvardq}=-\dot{\error{\dq x}}$.
Therefore, $\errorvardq\rightarrow0$ implies $\error{\dq x}\rightarrow1$,
which implies $\dqxmfinal\rightarrow\dq x_{d}$. 

To address the detrimental influence of the uncertainties and disturbances
in system \eqref{eq:tracking:error_dynamics:jacobian}, we address
as variable of interest the orientation and position errors from \eqref{eq:error:NonDisturbed:spatial error}
and \eqref{eq:error:NonDisturbed:new metrics}, defined respectively
as
\begin{align}
\teorientation{\errorvardq} & \triangleq\imag{\errorvarquat}, & \tetranslation{\errorvardq} & \triangleq-2\errorvarquat'(1{-}\errorvarquat^{\ast})=\error{\kineposition}.\label{eq:error_at_origin}
\end{align}

\subsection{Performance Under Uncertainties and Disturbances}

The tracking error defined in the previous subsection explicitly accounts
for uncertainties and noises in the closed-loop control of the robotic
arm, allowing a performance assessment for any control strategy. If
the statistics of the uncertainties and noises are available, a stochastic
analysis can be considered \citep{2006_Simon_BOOK}. However, for
non-Euclidean spaces the probability density functions are, in general,
hard to characterize and, when available, difficult to manipulate.
In this paper, we propose a deterministic performance analysis based
on the $H_{\infty}$ approach \citep{2006_AbuKhalaf_Huang_Lewis_BOOK},
in which the effect of the input onto the output is intuitively measured
as a maximal level of amplification. The main advantage is the needlessness
for assumptions regarding the statistics of the uncertainties and
noises. As a result, the analysis is simpler than the stochastic one.

The following definition describes the robust performance (in the
$H_{\infty}$ sense) in terms of the dual quaternion error \eqref{eq:error:NonDisturbed:new metrics}
and the disturbances $\noiseadd=\noiseaddrotation+\dual\noiseaddposition$
and $\noisepose=\noiseposerotation+\dual\noiseposeposition$, assuming
$\noiseaddrotation,\noiseaddposition,\noiseposerotation,\noiseposeposition\in L_{2}([0,\infty),\purequatset)$.\footnote{$L_{2}$ is the Hilbert space of all square-integrable functions.}
\begin{defn}
\label{def:H_inf}For $\perfrotation_{1},\perfrotation_{2},\perftranslation_{1},\perftranslation_{2}\in\left(0,\infty\right)$,
the robust control performance is achieved, in the $H_{\infty}$ sense,
if the following hold \citep{2006_AbuKhalaf_Huang_Lewis_BOOK}

(1) The error \eqref{eq:error:NonDisturbed:new metrics} is exponentially
stable for $\noiseadd{=}\noisepose{=}0$;

(2) Under the assumption of zero initial conditions, the disturbances'
influence upon the attitude and translation errors is attenuated below
a desired level; that is, $\forall\left(\noiseaddrotation,\noiseposerotation,\noiseaddposition,\noiseposeposition\right)\in L_{2}((0,\infty),\purequatset)$
\begin{align*}
\begin{aligned}\int_{0}^{\infty}\negthickspace\negthickspace\negthickspace\norm{\teorientation{\errorvardq\left(t\right)}}^{2}\negthickspace dt & \leq\perfrotation_{1}^{\negthickspace\negthickspace2}\int_{0}^{\infty}\negthickspace\negthickspace\negthickspace\norm{\noiseaddrotation\left(t\right)}^{2}\negthickspace dt+\perfrotation_{2}^{\negthickspace\negthickspace2}\int_{0}^{\infty}\negthickspace\negthickspace\negthickspace\norm{\noiseposerotation\left(t\right)}^{2}\negthickspace dt,\\
\int_{0}^{\infty}\negthickspace\negthickspace\negthickspace\norm{\tetranslation{\errorvardq\left(t\right)}}^{2}\negthickspace dt & \leq\perftranslation_{1}^{\negthickspace\negthickspace2}\int_{0}^{\infty}\negthickspace\negthickspace\negthickspace\norm{\noiseaddposition\left(t\right)}^{2}\negthickspace dt+\perftranslation_{2}^{\negthickspace\negthickspace2}\int_{0}^{\infty}\negthickspace\negthickspace\negthickspace\norm{\noiseposeposition\left(t\right)}^{2}\negthickspace dt.
\end{aligned}
\end{align*}
\end{defn}
The $H_{\infty}$ criterion determines the maximum ratio of the error
to the disturbance, in terms of their $L_{2}$-norms, such that the
parameters $\perfrotation_{1}^{\negthickspace\negthickspace2},\perfrotation_{2}^{\negthickspace\negthickspace2},\perftranslation_{1}^{\negthickspace\negthickspace2},\perftranslation_{2}^{\negthickspace\negthickspace2}$
refer to the upper bounds of the performance index of each separate
disturbance effect.

\section{$H_{\infty}$ Control Strategies\label{sec:-CONTROL-STRATEGIES}}


In this section, we exploit the dual quaternion algebra to solve the
$H_{\infty}$ kinematic control problem while accounting for both
additive and multiplicative disturbances.
\begin{thm}[$H_{\infty}$ Tracking Control\footnote{Set-point control (i.e., regulation) is a particular case that can
be achieved by letting $\dq{\xi}_{d}=0$ in \eqref{THM:Tracking Controller}.}]
\label{THEOREM_TRACKING}Let $\mymatrix J^{+}$ be the Moore-Penrose
pseudo-inverse of $\mymatrix J$, and $\teorientation{\errorvardq}$
and $\tetranslation{\errorvardq}$ be given by \eqref{eq:error_at_origin}.
For $\perfrotation_{1},\perfrotation_{2},\perftranslation_{1},\perftranslation_{2}\in\left(0,\infty\right)$,
the task-space kinematic controller yielding joints' velocity inputs
\begin{align}
\dot{\myvec{\junta}} & =\mymatrix J^{+}\left(\begin{bmatrix}\KO\vectorquat\teorientation{\errorvardq}\\
-\KT\vectorquat\tetranslation{\errorvardq}
\end{bmatrix}+\vector\left(\error{\dq x}\twist_{d}\error{\dq x}^{\ast}\right)\right),\label{THM:Tracking Controller}
\end{align}
where $\KO=\left(\perfrotation_{1}^{\negthickspace\negmedspace-2}+\perfrotation_{2}^{\negthickspace\negmedspace-2}\right)^{1/2}$
and $\KT=\left(\perftranslation_{1}^{\negthickspace\negmedspace-2}+\perftranslation_{2}^{\negthickspace\negmedspace-2}\right)^{1/2}$,
ensures exponential $H_{\infty}$ tracking performance with disturbance
attenuation in the sense of Definition~\ref{def:H_inf}. Furthermore,
if $\gamma{\triangleq}\perftranslation_{1}{=}\perftranslation_{2}{=}\perfrotation_{1}{=}\perfrotation_{2}$
such that $\KO=\KT=\sqrt{2}\gamma^{-1}$, then the aforementioned
gains $\KO$ and $\KT$ ensure the minimum instantaneous control effort
(i.e., minimum norm of the control inputs) for the closed-loop system
\eqref{eq:tracking:error_dynamics:jacobian},\eqref{THM:Tracking Controller}.
\end{thm}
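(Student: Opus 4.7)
The plan is to separate the argument into (i) exponential stability and $L_{2}$ attenuation, handled by a dissipation/storage-function argument tailored to $\text{Spin}(3)\ltimes\mathbb{R}^{3}$, and (ii) the minimum-effort optimality, which reduces to a one-line optimisation over the prescribed attenuation levels. Throughout I would work away from kinematic singularities so that $\mymatrix J\mymatrix J^{+}=\mymatrix I$ (singularity avoidance being treated separately in the paper). For (i), I would adopt the storage function
\[
V \triangleq \|1-\error{\quat r}\|^{2}+\|\error{\quat p}\|^{2},
\]
which is nonnegative, vanishes exactly when $\error{\dq x}=1$, and, via the identities $\|1-\error{\quat r}\|^{2}=2(1-\real{\error{\quat r}})$ and $\|\teorientation{\errorvardq}\|^{2}=\|\imag{\error{\quat r}}\|^{2}=1-\real{\error{\quat r}}^{2}$, is equivalent to a quadratic form in $\teorientation{\errorvardq}$ and $\tetranslation{\errorvardq}$ near the reference.

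I would then substitute the feedback law \eqref{THM:Tracking Controller} into the closed-loop error kinematics \eqref{eq:tracking:error_dynamics:jacobian}: the pseudo-inverse term injects the feed-forward $\error{\dq x}\twist_{d}\error{\dq x}^{\ast}$, which — after right-multiplication by $\error{\dq x}$ and use of $\error{\dq x}^{\ast}\error{\dq x}=1$ — cancels the $-\error{\dq x}\twist_{d}$ drift, while simultaneously adding proportional feedback weighted by $\KO$ on the orientation channel and by $\KT$ on the translation channel. Computing $\dot V$ by splitting the result into primary and dual parts, every cross term of the form $2\real{\quat a^{\ast}\quat w}$ with $\quat w\in\{\noiseaddrotation,\noiseposerotation,\noiseaddposition,\noiseposeposition\}$ is bounded via Young's inequality by $\alpha^{-1}\|\quat a\|^{2}+\alpha\|\quat w\|^{2}$. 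Choosing $\alpha\in\{\perfrotation_{1}^{2},\perfrotation_{2}^{2},\perftranslation_{1}^{2},\perftranslation_{2}^{2}\}$ for the four disturbance channels makes the quadratic residuals sum to precisely $\KO^{2}\|\teorientation{\errorvardq}\|^{2}+\KT^{2}\|\tetranslation{\errorvardq}\|^{2}$, by the defining identities of $\KO$ and $\KT$; these residuals are cancelled by the proportional feedback, yielding the dissipation inequality
\[
\dot{V}+\|\teorientation{\errorvardq}\|^{2}+\|\tetranslation{\errorvardq}\|^{2}\leq\perfrotation_{1}^{2}\|\noiseaddrotation\|^{2}+\perfrotation_{2}^{2}\|\noiseposerotation\|^{2}+\perftranslation_{1}^{2}\|\noiseaddposition\|^{2}+\perftranslation_{2}^{2}\|\noiseposeposition\|^{2}.
\]
Integrating over $[0,\infty)$ with $V(0)=0$ recovers \eqref{eq:def:H_inf conditions}; setting the four disturbances to zero gives $\dot V\leq -c V$ for some $c>0$, and hence the required exponential stability.

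For part (ii), the Moore-Penrose pseudo-inverse in \eqref{THM:Tracking Controller} already returns the minimum-Euclidean-norm joint-velocity $\dot{\myvec q}$ realising the prescribed task-space twist, so the instantaneous control effort is monotone in the magnitude of that twist — and hence monotone in $\KO$ and $\KT$. Minimising $\KO^{2}+\KT^{2}=\perfrotation_{1}^{-2}+\perfrotation_{2}^{-2}+\perftranslation_{1}^{-2}+\perftranslation_{2}^{-2}$ subject to the uniform-attenuation constraint $\perfrotation_{i},\perftranslation_{i}\leq\gamma$ is immediate: the minimum is attained at equality $\perfrotation_{1}=\perfrotation_{2}=\perftranslation_{1}=\perftranslation_{2}=\gamma$, giving $\KO=\KT=\sqrt{2}\gamma^{-1}$ as claimed.

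The hard part will be the algebraic bookkeeping in $\dot V$. Because $\error{\quat p}=\quat p-\error{\quat r}\quat p_{d}\error{\quat r}^{\ast}$ couples translation to orientation, $\dot{\error{\quat p}}$ inherits contributions from $\dot{\error{\quat r}}$, and both multiplicative disturbances $\noiseadd$ and $\noisepose$ distribute their primary and dual parts across both channels of \eqref{eq:tracking:error dynamics}. Isolating the cross terms cleanly — so that Young's inequality applies with exactly the intended constants and the residuals combine precisely into $\KO^{2}\|\teorientation{\errorvardq}\|^{2}+\KT^{2}\|\tetranslation{\errorvardq}\|^{2}$ — is where essentially all of the technical work lies.
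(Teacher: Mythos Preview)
Your storage function $V=\|1-\error{\quat r}\|^{2}+\|\error{\quat p}\|^{2}$ has no free weights, and that is where the argument breaks. After substituting the control law and computing $\dot V_{1}$ for the orientation part (exactly as in \ref{APPENDIX_A} with $\alpha_{1}=1$), you obtain
\[
\dot V_{1}=-\KO\|\teorientation{\errorvardq}\|^{2}-\inner{\teorientation{\errorvardq}}{\noiseaddrotation+\noiseposerotation}.
\]
Bounding the two cross terms by Young's inequality with parameters tuned so that the disturbance side equals $\perfrotation_{1}^{2}\|\noiseaddrotation\|^{2}+\perfrotation_{2}^{2}\|\noiseposerotation\|^{2}$ leaves a residual $\tfrac{1}{4}(\perfrotation_{1}^{-2}+\perfrotation_{2}^{-2})\|\teorientation{\errorvardq}\|^{2}=\tfrac{1}{4}\KO^{2}\|\teorientation{\errorvardq}\|^{2}$, not $\KO^{2}\|\teorientation{\errorvardq}\|^{2}$ as you wrote. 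The dissipation inequality $\dot V_{1}+\|\teorientation{\errorvardq}\|^{2}\leq(\cdot)$ then demands
\[
-\KO+\tfrac{1}{4}\KO^{2}+1=\tfrac{1}{4}(\KO-2)^{2}\leq 0,
\]
which forces $\KO=2$. So the feedback term, which is \emph{linear} in $\KO$, cannot cancel a residual that is \emph{quadratic} in $\KO$ except at one isolated value of the gain; for generic $\perfrotation_{1},\perfrotation_{2}$ your inequality fails. The paper's proof avoids this by weighting the storage function as $V=\alpha_{1}\|\errorvarquat\|^{2}+\alpha_{2}\|\errorvarquat'\|^{2}$ with $\alpha_{1},\alpha_{2}>0$ free, recasting the dissipation inequality as a $3\times3$ LMI, and using Schur complements to obtain $\KO\geq\alpha_{1}^{-1}+\tfrac{\alpha_{1}}{4}(\perfrotation_{1}^{-2}+\perfrotation_{2}^{-2})$. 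Minimising the right-hand side over $\alpha_{1}$ then \emph{produces} the formula $\KO=(\perfrotation_{1}^{-2}+\perfrotation_{2}^{-2})^{1/2}$ and simultaneously shows it is the smallest admissible gain.

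This also means your part (ii) is circular. You argue that the instantaneous effort is monotone in $\KO,\KT$ and then minimise the formulas over the attenuation levels, but the theorem's claim is that the \emph{formulas themselves} already give the minimum gains achieving the prescribed $\perfrotation_{i},\perftranslation_{i}$; that is exactly what the $\alpha$-optimisation in the paper establishes and what your argument presupposes. A final minor point: your single combined dissipation inequality yields only the sum of the two bounds in \eqref{eq:def:H_inf conditions}, not each inequality separately as Definition~\ref{def:H_inf} requires; the paper handles this by treating $V_{1}$ and $V_{2}$ independently.
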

\begin{proof}
First, we replace \eqref{THM:Tracking Controller} in \eqref{eq:tracking:error_dynamics:jacobian}
to obtain\footnote{Eq.~\eqref{eq:equivalent tracking controller} holds even if $\twist_{d}$
is not feasible, i.e., $\mymatrix J\mymatrix J^{+}\vector(\error{\dq x}\twist_{d}\error{\dq x}^{\ast})\neq\vector(\error{\dq x}\twist_{d}\error{\dq x}^{\ast})$.
Let $\myvec s\triangleq\vector(\error{\dq x}\twist_{d}\error{\dq x}^{\ast})$
then $\vectorinv(\mymatrix J\mymatrix J^{+}\myvec s)=\vectorinv\left(\myvec s\right)+\dq v_{s}$,
where $\dq v_{s}$ is just another source of disturbance to be added
into $\noiseadd$.\label{fn:if right-pseudoinverse-not-exist-result-in-disturbance}} 
\begin{align}
\dot{\error{\dq x}} & =\smash{\frac{1}{2}}\left(\vectorinv\left(\mymatrix J\dot{\bar{\myvec{\junta}}}\right)+\noiseadd+\noisepose\right)\error{\dq x},\label{eq:equivalent tracking controller}
\end{align}
where $\dot{\bar{\myvec{\junta}}}=\mymatrix J^{+}\begin{bmatrix}\KO\left(\vectorquat\teorientation{\errorvardq}\right)^{T} & -\KT\left(\vectorquat\tetranslation{\errorvardq}\right)^{T}\end{bmatrix}^{\smash{T}}$.

\noindent \textbf{(Exponential stability)} To study the stability
of the closed loop system, let us regard the following Lyapunov candidate
function
\begin{equation}
\begin{aligned}V(\errorvardq\left(t\right)) & =V_{1}(\errorvarquat\left(t\right))+V_{2}(\errorvarquat'\left(t\right)),\end{aligned}
\label{THM:LYAPUNOV}
\end{equation}
where $V_{1}(\errorvarquat\left(t\right))\triangleq\alpha_{1}\norm{\errorvarquat(t)}^{2}$
and $V_{2}(\errorvarquat'\left(t\right))\triangleq\alpha_{2}\norm{\errorvarquat'(t)}^{2}$
with $\alpha_{1},\alpha_{2}\in\left(0,\infty\right)$. The time-derivative
of \eqref{THM:LYAPUNOV}, considering \eqref{eq:equivalent tracking controller}
(see \ref{APPENDIX_A}) in the absence of disturbances (i.e., $\noiseadd=\noisepose=0$)
yields
\begin{align*}
\dot{V}_{1}(\errorvarquat) & \leq-\frac{\KO}{2}\alpha_{1}\norm{\errorvarquat}^{2}, & \dot{V}_{2}(\errorvarquat') & \!=\!-2\KT\alpha_{2}\norm{\errorvarquat'}^{2}.
\end{align*}
Hence, the closed-loop system, in the absence of disturbances, satisfy
the following inequalities 
\begin{align}
\dot{V}(\errorvardq\left(t\right)) & \leq-\frac{\KO}{2}\alpha_{1}\norm{\errorvarquat(t)}^{2}-2\KT\alpha_{2}\norm{\errorvarquat'(t)}^{2}\label{eq:proof:Lyapunov Derivative 1}\\
 & \leq-\min\left\{ \frac{\KO}{2},2\KT\right\} V(\errorvarquat\left(t\right))\leq0,\nonumber 
\end{align}
which implies, by the Comparison Lemma \citep[p. 85]{1996_Khalil_BOOK},
that the closed-loop system is exponentially stable: 
\begin{align*}
V(\errorvardq\left(t\right)) & \leq V(\errorvarquat\left(t_{0}\right))\exp\left(-\min\left\{ \frac{\KO}{2},2\KT\right\} \left(t-t_{0}\right)\right).
\end{align*}
This way, Condition~1 in Definition~\ref{def:H_inf} is satisfied
for $\KO,\KT\in\left(0,\infty\right)$. In addition, by using the
Comparison Lemma together with \eqref{eq:Derivative of V2} and \eqref{eq:Derivative of V1 (inequality)},
both individual attitude and translation dynamics achieve exponential
stability in the absence of disturbances, that is, 
\begin{align*}
\norm{\errorvarquat(t)}^{2} & \leq\norm{\errorvarquat(t_{0})}^{2}\exp\left(-\frac{1}{2}\KO\left(t-t_{0}\right)\right),\\
\norm{\errorvarquat'(t)}^{2} & =\norm{\errorvarquat'(t_{0})}^{2}\exp\left(-2\KT\left(t-t_{0}\right)\right).
\end{align*}

\noindent \textbf{(Disturbance attenuation)} To verify Condition~2
in Definition~\ref{def:H_inf}, now we explicitly consider the influence
of uncertainties and disturbances over the closed-loop system. As
a consequence, the Lyapunov derivative yields (see \eqref{eq:derivative_lyapunov_with_disturbance})
\begin{gather}
\begin{aligned}\dot{V}\left(\errorvardq\left(t\right)\right)= & \,\overset{\dot{V}_{1}\left(\errorvardq\left(t\right)\right)}{\overbrace{-\alpha_{1}\inner{\teorientation{\errorvardq}}{\KO\teorientation{\errorvardq}+\noiseaddrotation+\noiseposerotation}}}\\
 & \quad\quad\underset{\dot{V}_{2}\left(\errorvardq\left(t\right)\right)}{\underbrace{-\frac{\alpha_{2}}{2}\inner{\tetranslation{\errorvardq}}{\KT\tetranslation{\errorvardq}-\noiseaddposition-\noiseposeposition}}}.
\end{aligned}
\label{eq:proof:Lyapunov Derivative with V}
\end{gather}

\noindent Defining $V_{\gamma\mathcal{_{O}}}\triangleq\norm{\teorientation{\errorvardq}}^{2}-\perfrotation_{1}^{\negthickspace\negthickspace2}\norm{\noiseaddrotation}^{2}-\perfrotation_{2}^{\negthickspace\negthickspace2}\norm{\noiseposerotation}^{2}$
and $V_{\gamma\mathcal{_{T}}}\triangleq\norm{\tetranslation{\errorvardq}}^{2}-\perftranslation_{1}^{\negthickspace\negthickspace2}\norm{\noiseaddposition}^{2}-\perftranslation_{2}^{\negthickspace\negthickspace2}\norm{\noiseposeposition}^{2}$,
Condition~2 is fulfilled if, for all $t\in[0,\infty)$, the following
inequalities hold
\begin{align}
\dot{V}_{1}\left(\errorvardq\left(t\right)\right)+V_{\gamma\mathcal{_{O}}} & \leq0, & \dot{V}_{2}\left(\errorvardq\left(t\right)\right)+V_{\gamma\mathcal{_{T}}} & \leq0.\label{eq:robustness inequalities}
\end{align}
Indeed, under zero initial conditions (i.e., $V\left(\errorvardq\left(0\right)\right)=0$),
integrating the first inequality in \eqref{eq:robustness inequalities}
results in
\begin{align*}
\int_{0}^{\infty}\negthickspace\negthickspace\negthickspace V_{\gamma\mathcal{_{O}}}dt & \leq{-}\negmedspace\int_{0}^{\infty}\negthickspace\negthickspace\negthickspace\dot{V}_{1}\negthinspace\left(\errorvardq\left(t\right)\right)\negthinspace dt{=}V_{1}\negthinspace\left(\errorvardq\left(0\right)\right){-}\lim_{t\to\infty}V_{1}\negthinspace\left(\errorvardq\left(t\right)\right){\leq}0,
\end{align*}
where the last inequality above holds because $V_{1}\left(\errorvardq\left(0\right)\right)=0$
and $V_{1}\left(\errorvardq\left(t\right)\right)\geq0$, $\forall t$,
which implies the first inequality of Condition~2 in Definition~\ref{def:H_inf}.
The same applies to the second inequality in \eqref{eq:robustness inequalities}.

To satisfy \eqref{eq:robustness inequalities}, we use the definition
of inner product as in Footnote~\ref{fn:inner_product} to rewrite
the first inequality in \eqref{eq:robustness inequalities} as\footnote{Notice that $\mymatrix{\Gamma}^{*}$ is the (quaternion) conjugate
transpose of $\mymatrix{\Gamma}\in\mathbb{H}^{m\times n}$, defined
analogously to its complex matrices counterpart.} 
\begin{equation}
\begin{bmatrix}\teorientation{\errorvardq}\\
\noiseaddrotation\\
\noiseposerotation
\end{bmatrix}^{*}\underset{\mymatrix M}{\negmedspace\underbrace{\begin{bmatrix}-\left(\alpha_{1}\KO{-}1\right) & -\nicefrac{\alpha_{1}}{2} & -\nicefrac{\alpha_{1}}{2}\\
-\nicefrac{\alpha_{1}}{2} & -\perfrotation_{1}^{\negthickspace\negthickspace2} & 0\\
-\nicefrac{\alpha_{1}}{2} & 0 & -\perfrotation_{2}^{\negthickspace\negthickspace2}
\end{bmatrix}}\negthickspace}\begin{bmatrix}\teorientation{\errorvardq}\\
\noiseaddrotation\\
\noiseposerotation
\end{bmatrix}\negthickspace{\leq}0.\label{eq:LMI orientation}
\end{equation}
Since $\mymatrix M\leq0$ implies \eqref{eq:LMI orientation},\footnote{Given a symmetric matrix $\mymatrix M\in\mathbb{R}^{n\times n}$,
if $\myvec u^{T}\mymatrix M\mymatrix u\leq0$, $\forall\myvec u\in\mathbb{R}^{n}$,
then $\myvec{\Gamma}^{*}\mymatrix M\myvec{\Gamma}\leq0$, $\forall\myvec{\Gamma}\in\mathbb{H}^{n}$. } by using Schur complements it is possible to show that $\mymatrix M\leq0$
if and only if
\begin{equation}
\KO\geq\frac{1}{\alpha_{1}}+\frac{\alpha_{1}}{4}\left(\perfrotation_{1}^{\negthickspace\negmedspace-2}+\perfrotation_{2}^{\negthickspace\negmedspace-2}\right).\label{eq:ko}
\end{equation}
We repeat the same procedure for the second inequality in \eqref{eq:robustness inequalities}
to obtain
\begin{equation}
\KT\geq\smash{\frac{2}{\alpha_{2}}}+\smash{\frac{\alpha_{2}}{8}}\left(\perftranslation_{1}^{\negthickspace\negmedspace-2}+\perftranslation_{2}^{\negthickspace\negmedspace-2}\right).\label{eq:kt}
\end{equation}

\noindent \textbf{(Minimum instantaneous control effort)} Since there
exist an infinite number of solutions for $\alpha_{1}$ and $\alpha_{2}$
that satisfy \eqref{eq:ko} and \eqref{eq:kt}, we seek $\alpha_{1\text{opt}}$
and $\alpha_{2\text{opt}}$ that minimize the positive control gains
$\KO$ and $\KT$. By letting $f\left(\alpha_{1}\right)\triangleq\alpha_{1}^{-1}+\left(1/4\right)\alpha_{1}\perfrotation$
and $g\left(\alpha_{2}\right)\triangleq2\alpha_{2}^{-1}+\left(1/8\right)\alpha_{2}\perftranslation$,
where $\perfrotation{\triangleq}\perfrotation_{1}^{\negthickspace\negmedspace-2}+\perfrotation_{2}^{\negthickspace\negmedspace-2}$
and $\perftranslation{\triangleq}\perftranslation_{1}^{\negthickspace\negmedspace-2}+\perftranslation_{2}^{\negthickspace\negmedspace-2}$,
we minimize $f(\alpha_{1})$ and $g(\alpha_{2})$ to obtain $\alpha_{1\text{opt}}=2\perfrotation^{\negthickspace\negmedspace-1/2}$
and $\alpha_{2\text{opt}}{=}4\perftranslation^{\negthickspace\negmedspace-1/2}$.
Thus, the minimum values for the control gains $\KO$ and $\KT$ that
satisfy \eqref{eq:ko} and \eqref{eq:kt} are 
\begin{align*}
\KO & =f\left(\alpha_{1\text{opt}}\right)=\left(\perfrotation_{1}^{\negthickspace\negmedspace-2}+\perfrotation_{2}^{\negthickspace\negmedspace-2}\right)^{1/2},\\
\KT & =g\left(\alpha_{2\text{opt}}\right)=\left(\perftranslation_{1}^{\negthickspace\negmedspace-2}+\perftranslation_{2}^{\negthickspace\negmedspace-2}\right)^{1/2}.
\end{align*}
If $\gamma\triangleq\perftranslation_{1}{=}\perftranslation_{2}{=}\perfrotation_{1}{=}\perfrotation_{2}$
then $\kappa=\KO{=}\KT=\sqrt{2}\gamma^{-1}$. Since the closed-loop
system \eqref{eq:tracking:error_dynamics:jacobian},\eqref{THM:Tracking Controller}
is equivalent to \eqref{eq:equivalent tracking controller}, where
$\dot{\bar{\myvec{\junta}}}{=}\kappa\mymatrix J^{+}\begin{bmatrix}\vectorquat\teorientation{\errorvardq}^{T} & -\vectorquat\tetranslation{\errorvardq}^{T}\end{bmatrix}^{T}$,
then $\norm{\dot{\bar{\myvec{\junta}}}}=\kappa\norm{\myvec{\Gamma}_{\kappa}}$,
with $\myvec{\Gamma}_{\kappa}{=}\mymatrix J^{+}\begin{bmatrix}\vectorquat\teorientation{\errorvardq}^{T} & -\vectorquat\tetranslation{\errorvardq}^{T}\end{bmatrix}^{T}$.
Therefore, since $\kappa$ is the minimum gain that satisfies the
disturbance attenuation specification $\gamma$, then $\norm{\dot{\bar{\myvec{\junta}}}}$
is the minimum instantaneous control effort.
\end{proof}

\allowdisplaybreaks

\section{Simulation Results\label{sec:Experiments}}


To validate and quantitatively assess the performance of the proposed
techniques under different scenarios and conditions, this section
presents simulated results of a KUKA LBR-IV arm connected to a Barrett
Hand. The DQ Robotics toolbox from \citep{Adorno2019} was used for
both robot modeling and control using dual quaternion algebra. Simulations
are performed in V-REP,\footnote{https://www.coppeliarobotics.com/}
in asynchronous mode with $5\,\unit{ms}$ sampling period, with Bullet
2.83 to realistically simulate the robot dynamics.\footnote{http://bulletphysics.org}

\begin{figure}[t]
	\centering \includegraphics[width=0.87\columnwidth]{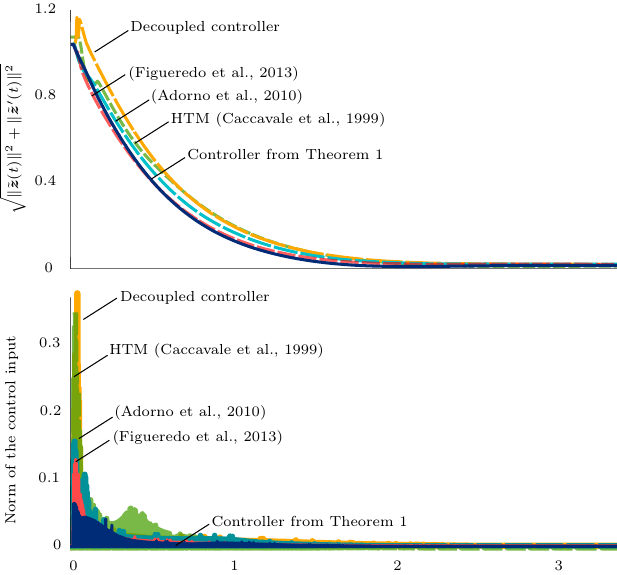}
	
	\caption{Set-point control error (\emph{top}) and control-input norm (\emph{bottom}).}
	\label{fig:ex:setpoint}
\end{figure}

\subsection{Set-point control\label{subsec:Setpoint-control}}

\noindent For the first scenario, the initial manipulator end-effector
pose was $\dq x_{0}=\quat r_{0}+\left(1/2\right)\dual\quat p_{0}\quat r_{0}$,
with $\quat r_{0}{=}\cos\left(\phi_{0}/2\right)+\quat n_{0}\sin\left(\phi_{0}/2\right)$
such that $\phi_{0}=2.187\,\unit{rad}$ and $\quat n_{0}{=}-0.689\imi+0.395\imj+0.606\imk$,
from where it was supposed to travel to $\dq x_{d}{=}\quat r_{d}+\left(1/2\right)\dual\quat p_{d}\quat r_{d}$
with $\quat r_{d}{=}\cos\left(\pi/4\right)+\imj\sin\left(\pi/4\right)$
and $\quat p_{d}{=}1.56\imi-0.43\imj+0.65\imk$.

To evaluate Theorem~\ref{THEOREM_TRACKING} in a regulation problem,
we compared the control law (\ref{THM:Tracking Controller}), with
$\twist_{d}{=}0$, with two different controllers based on dual quaternion
representation \citep{2010_Adorno_IROS,2013_Figueredo_Adorno_Ishihara_Borges_ICRA},
a decoupled controller that concerns independent attitude and translation
task Jacobians (\ref{sec:controllers}), and a classic HTM-based controller
\citep{1999_Caccavale_Siciliano_Villani_AJC}. To allow a fair comparison,
all controllers were set with the same constant control gain $\kappa_{\mathcal{O}}=\kappa_{\mathcal{T}}=\kappa=2$.

The error norm in Fig.~\ref{fig:ex:setpoint} (top figure) shows
similar convergence for all controllers,\footnote{In this section, we used the real end-effector pose from V-REP.}
as expected for undisturbed scenarios, because all of them result
in a similar closed-loop first-order differential equation (in their
own error variables) and they have the same gain. In contrast, the
norm of the control inputs (i.e., the instantaneous control effort),
shown in the bottom figure, indicates that the controller from Theorem~\ref{THEOREM_TRACKING}
requires the least amount of control effort. This is due to the fact
that, although all controllers have the same gain (which ensures the
same convergence rate), they employ different error metrics, hence
resulting in different end-effector trajectories as not all error
metrics respect the topology of the space of rigid motions, which
in turn require different control efforts.


\subsection{Tracking }

\noindent To evaluate Theorem~\ref{THEOREM_TRACKING} in a tracking
problem, the end-effector was prescribed to follow a desired task
trajectory towards the end-pose $\dq x_{d}(t_{f})=\quat r_{d}(t_{f})+(1/2)\dual\quat p_{d}(t_{f})\quat r_{d}(t_{f})$,
where $\quat r_{d}(t_{f})=0.67\imi+0.01\imj-0.74\imk$ and $\quat p_{d}(t_{f})=0.05\imi-1.15\imj+0.75\imk$.
We compared Theorem~\ref{THEOREM_TRACKING} with the same controllers
from the previous case. All controllers were set with control gain
$\kappa=5$.

The trajectory tracking error is shown in Fig.~\ref{fig:ex:tracking : error new}.
The dark blue curve concerns the result based on the tracking control
law of Theorem~\ref{THEOREM_TRACKING}. The result demonstrates the
improved performance when compared to results from \citealt{2010_Adorno_IROS,2013_Figueredo_Adorno_Ishihara_Borges_ICRA},
decoupled controller, and HTM-based controller \citep{1999_Caccavale_Siciliano_Villani_AJC},
all of them with similar control effort, as shown in Fig.~\ref{fig:ex:tracking : control effort},
which highlights the importance of using a proper feedforward correction
term during tracking control.

\begin{figure}
	\centering \includegraphics[width=0.95\columnwidth]{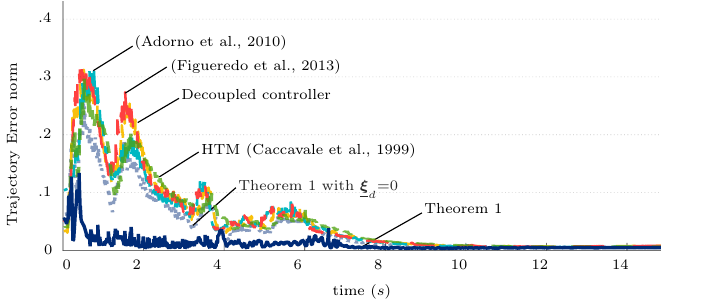}
	
	\caption{Tracking control error.}
	\label{fig:ex:tracking : error new}
\end{figure}

\begin{figure}
	\centering \includegraphics[width=0.95\columnwidth]{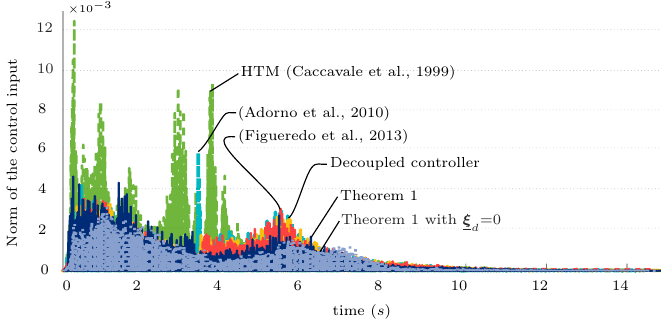}
	
	\caption{Trajectory tracking: norm of the control input.}
	\label{fig:ex:tracking : control effort}
\end{figure}

\subsection{$H_{\infty}$ robustness}

To illustrate the performance of the proposed robust $H_{\infty}$
controller under different uncertainties and disturbances, a task
was devised based on the motion of a mobile platform, a Pioneer P3-DX,
which moved in triangle-wave fashion, alternating smoothly back and
forth at fixed speed (respectively with period of $2.5\,s$ and $3.45\,s$).
The end-effector had to track the non-fixed target with a constant
relative pose. Since in this scenario the robot manipulator does not
have knowledge of the mobile base velocity, the trajectory has an
additional unknown twist, which is a disturbance that directly affects
the relative pose.

Theorem~\ref{THEOREM_TRACKING} was used with different values of
$\perftranslation$, while keeping $\perfrotation=2$ constant. Table~\ref{table:diff hinf values}
summarizes the numerically computed noise to error attenuation, 
\begin{align*}
	\perftranslation_{\text{sim}} & {=}\tfrac{\int_{0}^{T}\norm{\tetranslation{\errorvardq\left(t\right)}}^{2}dt}{\int_{0}^{T}\norm{\noiseaddposition\left(t\right)}^{2}+\norm{\noiseposeposition\left(t\right)}^{2}dt}, & \negthickspace\perfrotation_{\text{sim}}{=} & \tfrac{\int_{0}^{T}\norm{\teorientation{\errorvardq\left(t\right)}}^{2}dt}{\int_{0}^{T}\norm{\noiseaddrotation\left(t\right)}^{2}+\norm{\noiseposerotation\left(t\right)}^{2}dt}.
\end{align*}
As expected from the $H_{\infty}$ norm given by Definition~\ref{def:H_inf},
the noise to error attenuation remains below the prescribed thresholds,
i.e., $\perfrotation_{\text{sim}}{\leq}\perfrotation$ and $\perftranslation_{\text{sim}}{\leq}\perftranslation$,
for all $\perfrotation$, $\perftranslation$.

\begin{table}[t]
	\caption{Comparison between theoretical upper bound ($\protect\perftranslation$,
		$\protect\perfrotation$) with the numerically calculated noise-to-error
		attenuation ($\protect\perftranslation_{\text{sim}}$, $\protect\perfrotation_{\text{sim}}$).}
	\vspace{-5pt}\setlength{\tabcolsep}{4pt}
	\begin{centering}
		{%
			\begin{tabular}{cccccccc}
				\hline 
				$\negthickspace$$\negthickspace$$\perftranslation$ & $3.5$ & $2.0$ & $0.9$ & $0.6$ & $0.5$ & $0.4$ & $0.2$\tabularnewline
				$\perftranslation_{\text{sim}}$ & $1.914$ & $1.235$ & $0.736$ & $0.528$ & $0.404$ & $0.326$ & $0.167$\tabularnewline
				\hline 
				$\negthickspace$$\negthickspace$$\perfrotation$ & $2$ & $2$ & $2$ & $2$ & $2$ & $2$ & $2$\tabularnewline
				$\perfrotation_{\text{sim}}$ & $0.95$ & $1.01$ & $0.99$ & $1.03$ & $1.04$ & $1.01$ & $0.99$\tabularnewline
				\hline 
		\end{tabular}}
		\par\end{centering}
	\label{table:diff hinf values}
\end{table}

The proposed controller, with $\perftranslation{=}0.4,\,\perfrotation{=}1$,
was again compared to the dual-quaternion based controllers from \citet{2010_Adorno_IROS}
and \citet{2013_Figueredo_Adorno_Ishihara_Borges_ICRA}, and the HTM-based
controller \citep{1999_Caccavale_Siciliano_Villani_AJC}. To maintain
fairness, all controllers were manually set to ensure similar control
effort in terms of $\int_{0}^{T}\norm{\myvec u\left(t\right)}dt$.
The numerically calculated noise-to-error attenuation from the simulations,
presented in Table~\ref{table:hinf comparison}, shows that for the
same control effort our controller outperforms the other ones in terms
of disturbance attenuation.

\begin{table}[t]
	\caption{Numerically computed noise-to-error attenuation ($\protect\perftranslation_{\text{sim}}$,
		$\protect\perfrotation_{\text{sim}}$) compared against theoretical
		values $\protect\perftranslation{=}0.4$ and $\protect\perfrotation{=}1.0$.}
	
	\vspace{-5pt}\setlength{\tabcolsep}{3pt}
	\begin{centering}
		\begin{tabular}{cccc}
			\hline 
			Theorem~\ref{THEOREM_TRACKING} & {\small \citeauthor{2014_Figueredo_Adorno_Ishihara_Borges__IROS}} & {\small \citeauthor{2010_Adorno_IROS}} & {\small HTM}\tabularnewline
			$\perftranslation_{\text{sim}}{=}\boldsymbol{0.32}$ & $\perftranslation_{\text{sim}}{=}0.63$ & $\perftranslation_{\text{sim}}{=}0.61$ & $\perftranslation_{\text{sim}}{=}0.67$\tabularnewline
			$\perfrotation_{\text{sim}}{=}\boldsymbol{0.65}$ & $\perfrotation_{\text{sim}}{=}0.86$ & $\perfrotation_{\text{sim}}{=}0.87$ & $\perfrotation_{\text{sim}}{=}0.78$\tabularnewline
			\hline 
		\end{tabular}
		\par\end{centering}
	\label{table:hinf comparison}\vspace{-10pt}
\end{table}


\section{Experimental Results \label{sec:realExperiments}}


This section presents results from the implementation on a real Meka
Robotics A2 Arm, which is a highly compliant anthropomorphic $7$-joint
manipulator and presents several unmodeled dynamic effects. We defined
a trajectory over a helix curve in space with $10$ cm of both radius
and axis length. Fig.~\ref{fig:mekaRobot} illustrates the trajectory
with the aid of light-painting technique.\footnote{A photographic technique of moving a light source while taking a long
	exposure photograph, which leaves a trail in the final image.} We implemented the controller using C$++$ and the DQ Robotics toolbox
\citep{Adorno2019} on ROS\footnote{\url{https://www.ros.org/}} with
a $8$ ms sampling period. For the experiment, the end-effector pose
used in the control-loop was computed through the FKM.

The task trajectory was executed with different values for $\perfvariable=\perftranslation=\perfrotation$.
Each experimental condition was then executed ten times for statistical
significance (a total of $50$ trials). Results are summarized in
Table~\ref{table:real experiment} in terms of mean square error
and standard deviation integrated along the trajectory. As expected,
the $H_{\infty}$ controller is able to deal with disturbances originated,
for instance, from the coupled nonlinear dynamics, measurement noise,
parameter's uncertainties, among others. Indeed, assuming similar
disturbances conditions in all trials\textemdash which is reasonable
as the experimental conditions were the same\textemdash and normalizing
the results over the worst performance (i.e., the mean square error
$\mathrm{MSE}_{\max}$ corresponding to $\perfvariable{=}\bar{\gamma}{\triangleq}1.86$),
the average disturbance-to-error attenuation improvements ($\nicefrac{\mathrm{MSE}_{\max}}{\mathrm{MSE}}$)
were almost inversely proportional to the decrease in $\perfvariable$.
Fig.~\ref{fig:Error-trajectory-for} shows the error along the trajectory
for one execution of the robust controller for performance bounds
$\perfvariable\in\{1.86,0.93,0.37\}$. As the theory predicts, setting
smaller values for $\perfvariable$ yields a controlled system with
better disturbance attenuation, which is manifested by smaller errors.
In this case, since we did not directly measure the end-effector pose,
the error is given with respect to the nominal value obtained through
the FKM.
\begin{figure}
	\noindent \begin{centering}
		\subfloat[Meka Robotics A2 Arm (\emph{left}) and executed helicoidal trajectory
		with light-painting (\emph{right}).]{\noindent \centering{}%
			\begin{tabular}{cc}
				\includegraphics[width=0.45\columnwidth]{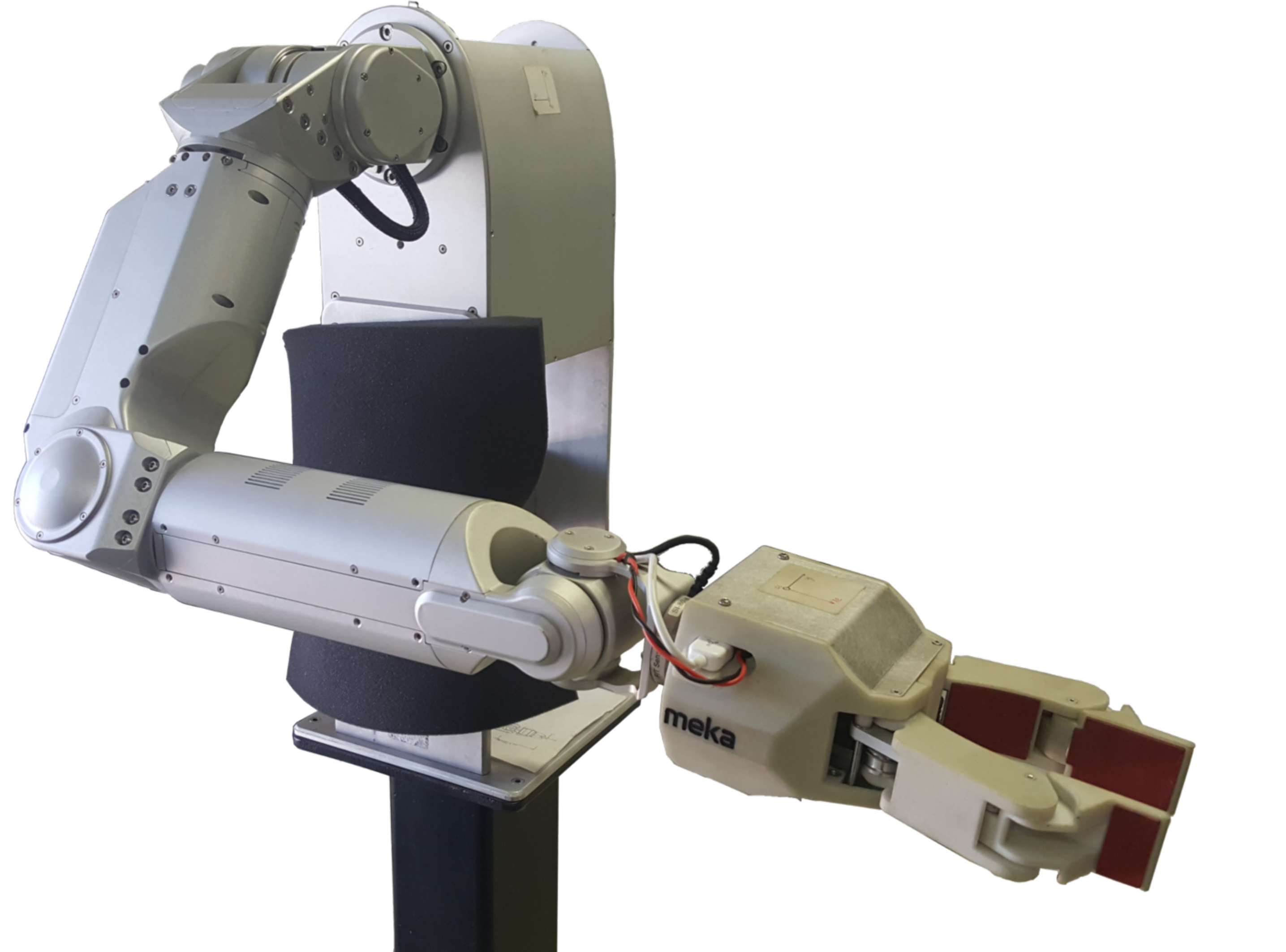} & \includegraphics[width=0.4\columnwidth]{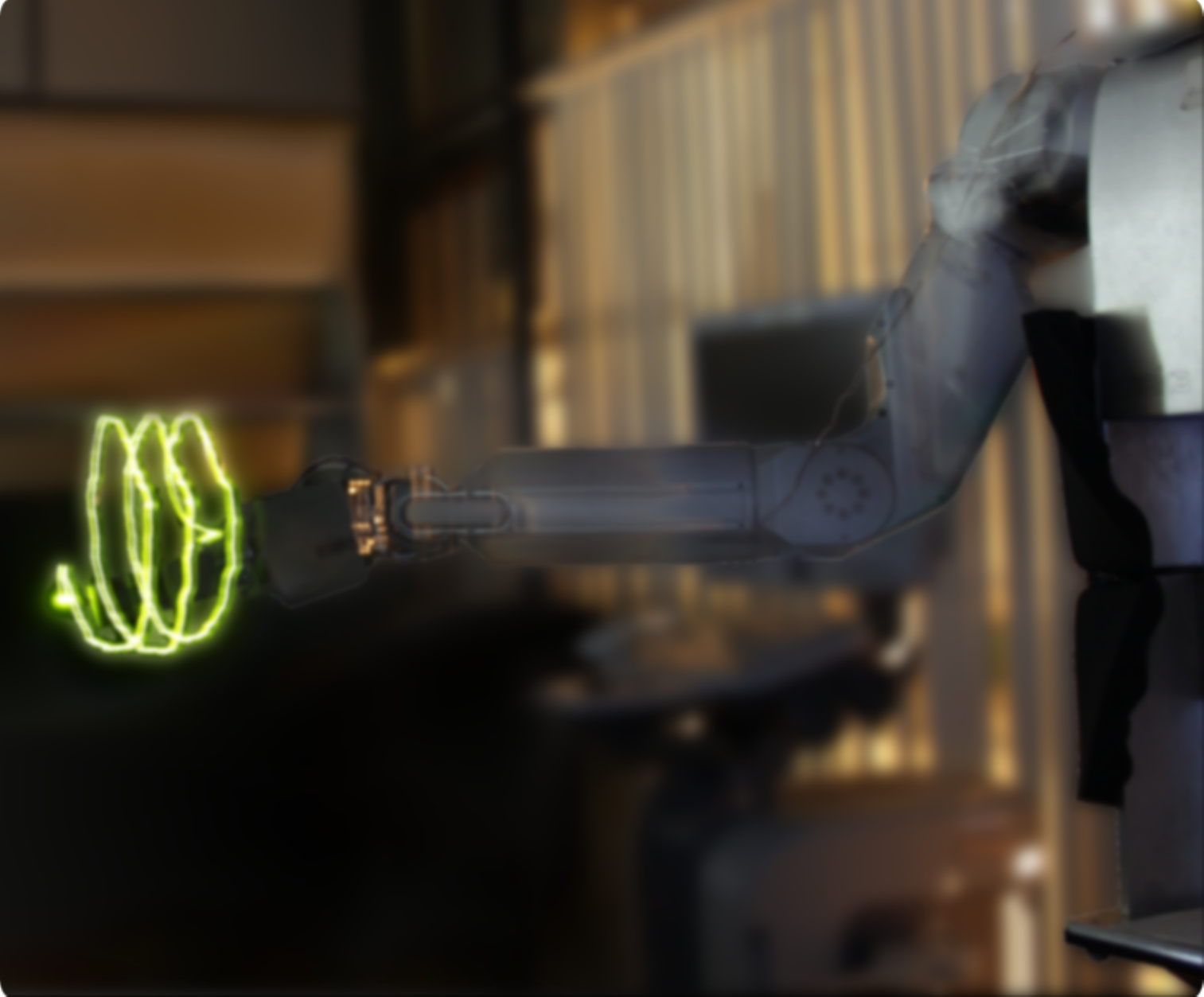}\tabularnewline
			\end{tabular}\label{fig:mekaRobot:MekaPhoto}}
		\par\end{centering}
	\noindent \centering{}\subfloat[Error trajectory for different values of prescribed noise-to-error
	upper bound $\gamma=\gamma_{\mathcal{T}}=\gamma_{\mathcal{O}}$ for
	one execution.\label{fig:Error-trajectory-for}]{\centering \includegraphics[width=0.95\columnwidth]{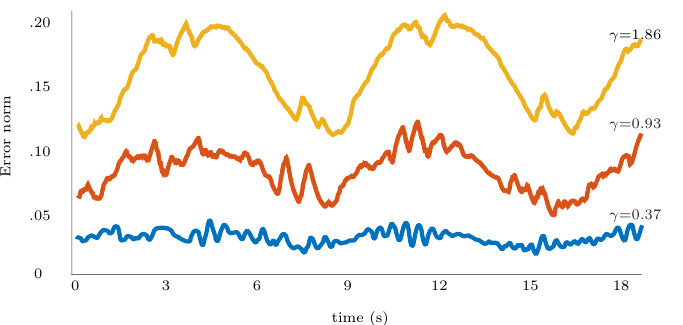}}\caption{Experiments on the real robot manipulator.}
	\label{fig:mekaRobot}
\end{figure}
\textcolor{blue}{}
\begin{table*}
	\caption{Average mean square error (MSE) and standard deviation (STD) under
		different performance conditions for the real experiment.\textcolor{blue}{\label{table:real experiment}}}
	\vspace{-5pt}
	\centering{}%
	\begin{tabular*}{1\textwidth}{@{\extracolsep{\fill}}cccccc}
		\hline 
		& \vspace{5pt}$\bar{\gamma}{\triangleq}1.86$ & $\frac{1}{2}\bar{\gamma}=0.93$ & $\frac{1}{3}\bar{\gamma}=0.62$ & $\frac{1}{4}\bar{\gamma}=0.47$ & $\frac{1}{5}\bar{\gamma}=0.37$\tabularnewline
		\hline 
		$\mathrm{MSE}\pm\mathrm{STD}$ & $\boldsymbol{3.118}\pm0.015$ & $1.662\pm0.007$ & $1.123\pm0.011$ & $0.829\pm0.011$ & $0.662\pm0.005$\tabularnewline
		$\nicefrac{\mathrm{MSE}_{\max}}{\mathrm{MSE}}$ & $1$ & $1.87$ & $2.78$ & $3.76$ & $4.71$\tabularnewline
		\hline 
	\end{tabular*}
\end{table*}

\subsection{Practical Considerations}

When implementing the controller (\ref{THM:Tracking Controller})
on a digital computer, instability issues may arise if the tracking
bandwidth is too high compared to the inner joint-level control-loop
bandwidth. Fortunately, modern manipulators have very fast joint controllers,
often around 1 KHz, while the outer-loop often runs between 20\textendash 125
Hz. Also, considering the discrete-time joint dynamics given by $\Delta\myvec q_{k+1}{=}a\Delta\myvec q_{k}{+}bT\myvec u$,
where $a\in(-1,1)$, $b>0$, $\Delta\myvec q_{k}=\myvec q_{k}-\myvec q_{k-1}$,
and $T$ is the sampling time, \citet{Bjerkeng2014} show that the
gain $\kappa=\kappa_{\mathcal{T}}=\mathcal{\kappa_{\mathcal{O}}}$
for a controller such as (\ref{THM:Tracking Controller}) should respect
$\kappa<2(1+a)/bT$. Assuming $T=1\unit{ms}$ and a tuned controller
(i.e., $a\approx0$), one has $\kappa<2000/b$. Thus, $\kappa$ can
still be very large without affecting the practical closed-loop stability.

\section{Further Discussions and Conclusions}

\label{sec:Conclusion}

In this paper, we have exploited the geometrical significance of the
dual quaternion algebra to derive an easy-to-implement closed-form
$H_{\infty}$ task-space controller for the non-Euclidean space $\text{Spin}(3)\ltimes\mathbb{R}^{3}$
that describes the end-effector pose. Realistic simulations and experiments
on a real robot were performed in different conditions and with different
control strategies, which led to the following conclusions: \emph{a})
compared to similar controllers with same convergence rate for regulation,
the proposed controller requires less instantaneous control effort
when no disturbances affect the system, and it has improved tracking
performance; \emph{b}) when there are disturbances, if all controllers
are tuned to have similar control effort, the proposed controller
ensures less set-point and tracking errors.

Lastly, our method works even when the end-effector pose is not directly
measured because it can be estimated using the FKM. Although for most
commercial robots the FKM provides sufficiently accurate end-effector
poses, the nominal value may still differ from the real one due to
several reasons. In that case, our strategy may be used to attenuate
the influence of any disturbances over the end-effector trajectory,
even if the FKM is not accurate enough. The residual error will then
be bounded by the magnitude of the unknown transformation between
the estimated and the actual end-effector pose.


\appendix

\section{Derivative of the Lyapunov function\label{APPENDIX_A}}


From \eqref{eq:error:NonDisturbed:new metrics}, $\errorvardq\triangleq1-\error{\dq x}=\errorvarquat+\dual\errorvarquat'$.
By letting $\error{\dq x}\triangleq\eta+\quat{\mu}+\dual\left(\eta'+\quat{\mu}'\right)$,
the positive definite functions $V_{1}$ and $V_{2}$ in the Lyapunov
function \eqref{THM:LYAPUNOV} can be rewritten as
\begin{align*}
V_{1}(\errorvarquat\left(t\right)) & =\alpha_{1}\norm{\errorvarquat(t)}^{2}{=}\alpha_{1}\left(\left(1{-}\eta\right)^{2}+\norm{\quat{\mu}}^{2}\right){=}2\alpha_{1}\left(1{-}\eta\right),\\
V_{2}(\errorvarquat'\left(t\right)) & =\alpha_{2}\norm{\errorvarquat'(t)}^{2}{=}\alpha_{2}\left(\eta'^{2}+\norm{\quat{\mu}'}^{2}\right).
\end{align*}
The derivative of \eqref{THM:LYAPUNOV} yields $\dot{V}_{1}(\errorvarquat\left(t\right))+\dot{V}_{2}(\errorvarquat'\left(t\right))$
with
\begin{align*}
\dot{V}_{1}(\errorvarquat\left(t\right)) & {=}-2\alpha_{1}\dot{\eta},\,\,\text{and }\dot{V}_{2}(\errorvarquat'\left(t\right)){=}2\alpha_{2}\eta'\dot{\eta}'+2\alpha_{2}\inner{\quat{\mu}'}{\dot{\quat{\mu}}'}.
\end{align*}
Using the closed-loop dynamics \eqref{eq:equivalent tracking controller},\footnote{Those hold even if $\mymatrix J$ is not full row rank ($\mymatrix J\mymatrix J^{+}{\neq}\mymatrix I$),
which usually happens in a singular configuration. In that case, let
$\myvec s\triangleq\begin{bmatrix}\KO\left(\vectorquat\teorientation{\errorvardq}\right)^{T} & -\KT\left(\vectorquat\tetranslation{\errorvardq}\right)^{T}\end{bmatrix}^{T}$
then $\vectorinv\left(\mymatrix J\mymatrix J^{+}\myvec s\right)=\vectorinv\left(\myvec s\right)+\dq v_{s}$,
where $\dq v_{s}$ is a disturbance to be added into $\noiseadd$.} we obtain
\begin{align*}
\dot{\eta} & =-\tfrac{1}{2}\inner{\quat h_{1}}{\quat{\mu}},\quad\quad\dot{\eta}'=-\tfrac{1}{2}\left(\inner{\quat h_{1}}{\quat{\mu}'}+\inner{\quat h_{2}}{\quat{\mu}}\right),\\
\dot{\quat{\mu}}' & =\tfrac{1}{2}\left(\eta'\quat h_{1}+\eta\quat h_{2}+\quat h_{1}\times\quat{\mu}'+\quat h_{2}\times\quat{\mu}\right),
\end{align*}
where\footnote{Herein, we use the fact that given $\quat u,\quat v\in\quatset_{p}$,
$\quat u\quat v=-\dotproduct{\quat u,\quat v}+\quat u\times\quat v$,
where both cross product, $\quat u\times\quat v\triangleq(\quat u\quat v-\quat v\quat u)/2$,
and inner product, $\inner{\quat u}{\quat v}\triangleq-\left(\quat u\quat v+\quat v\quat u\right)/2$,
are equivalent to their counterparts in $\mathbb{R}^{3}$.\label{fn:inner_product}} $\quat h_{1}=\KO\teorientation{\errorvardq}+\noiseaddrotation+\noiseposerotation$
and $\quat h_{2}=-\KT\tetranslation{\errorvardq}+\noiseaddposition+\noiseposeposition.$
Hence,
\begin{align}
\dot{V}_{1}(\errorvarquat(t)) & =\alpha_{1}\inner{\quat{\mu}}{\quat h_{1}},\label{eq:Lyapunov derivative (V1)}\\
\dot{V}_{2}(\errorvarquat'(t)) & =\alpha_{2}\inner{\eta\quat{\mu}'-\eta'\quat{\mu}+\quat{\mu}\times\quat{\mu}'}{\quat h_{2}}.\label{eq:Lyapunov derivative (V2)}
\end{align}
To investigate the first condition from Definition~\ref{def:H_inf},
which regards exponential stability of \eqref{eq:equivalent tracking controller}
in the absence of disturbances $\noiseadd$ and $\noisepose$, let
us rewrite \eqref{eq:Lyapunov derivative (V1)}-\eqref{eq:Lyapunov derivative (V2)}
as $\dot{V}(\errorvardq\left(t\right))=\dot{V}_{1}(\errorvarquat\left(t\right))+\dot{V}_{2}(\errorvarquat'\left(t\right))$
with
\begin{align*}
\dot{V}_{1}(\errorvarquat\left(t\right)) & =\alpha_{1}\inner{\myvec{\mu}}{\KO\teorientation{\errorvardq}},\\
\dot{V}_{2}(\errorvarquat'\left(t\right)) & =-\alpha_{2}\inner{\eta\myvec{\mu}'-\eta'\myvec{\mu}+\myvec{\mu}\times\myvec{\mu}'}{\KT\tetranslation{\errorvardq}}.
\end{align*}
From \eqref{eq:error_at_origin} and considering the unit dual quaternion
constraint $\eta\eta'+\inner{\quat{\mu}}{\quat{\mu}'}=0$ \citep{Kussaba2017},
we have $\tetranslation{\errorvardq}=2(\eta\myvec{\mu}'-\eta'\myvec{\mu}+\myvec{\mu}\times\myvec{\mu}')$
and $\teorientation{\errorvardq}=-\myvec{\mu}$; therefore,
\begin{align*}
\dot{V}_{1}(\errorvarquat\left(t\right)) & =-\alpha_{1}\KO\inner{\myvec{\mu}}{\myvec{\mu}}=-\alpha_{1}\KO\norm{\myvec{\mu}}^{2},\\
\begin{split}\dot{V}_{2}(\errorvarquat'\left(t\right)) & =-2\alpha_{2}\KT\left(\eta^{2}\inner{\myvec{\mu}'}{\myvec{\mu}'}+\eta'^{2}\inner{\myvec{\mu}}{\myvec{\mu}}-2\eta\eta'\inner{\myvec{\mu}}{\myvec{\mu}'}\right.\\
 & \left.\hphantom{=-2\alpha_{2}\KT+}+\inner{\myvec{\mu}}{\myvec{\mu}}\inner{\myvec{\mu}'}{\myvec{\mu}'}-\inner{\myvec{\mu}}{\myvec{\mu}'}^{2}\right).
\end{split}
\end{align*}
Notice the identity $\inner{\myvec{\mu}{\times}\myvec{\mu}'}{\myvec{\mu}{\times}\myvec{\mu}'}{=}\inner{\myvec{\mu}}{\myvec{\mu}}\inner{\myvec{\mu}'}{\myvec{\mu}'}{-}\inner{\myvec{\mu}}{\myvec{\mu}'}^{2}$
in the last equality. Since $\inner{\myvec{\mu}}{\myvec{\mu}'}=-\eta\eta'$
and $\eta^{2}+\norm{\quat{\mu}}^{2}=1$,\begin{small}
\begin{align}
\dot{V}_{1}(\errorvarquat\left(t\right)) & {=}-\alpha_{1}\frac{\KO}{2}\left(\norm{\myvec{\mu}}^{2}{+}\norm{\myvec{\mu}}^{2}\right)=-\alpha_{1}\frac{\KO}{2}\left(1{-}\eta^{2}{+}\norm{\myvec{\mu}}^{2}\right),\nonumber \\
\dot{V}_{2}(\errorvarquat'\left(t\right)) & {=}{-}2\alpha_{2}\KT\Bigl(\negmedspace\eta^{2}\negmedspace\norm{\myvec{\mu}'}^{2}{+}\eta'^{2}\norm{\myvec{\mu}}^{2}{+}\eta^{2}\eta'^{2}{+}\left(1{-}\eta^{2}\right)\negmedspace\norm{\myvec{\mu}'}^{2}\negmedspace\Bigr)\nonumber \\
 & {=}-2\alpha_{2}\KT\left(\eta'^{2}{+}\norm{\myvec{\mu}'}^{2}\right){=}{-}2\alpha_{2}\KT\norm{\errorvarquat'(t)}^{2}\negmedspace.\negthickspace\label{eq:Derivative of V2}
\end{align}
\end{small}For $\eta{\in}[0,1],$ it is easy to see that $\left(1{-}\eta\right)^{2}\leq\left(1{-}\eta^{2}\right).$
Thus,
\begin{equation}
\dot{V}_{1}(\errorvarquat\left(t\right))\leq-\alpha_{1}\frac{\KO}{2}\left(\left(1-\eta\right)^{2}+\norm{\myvec{\mu}}^{2}\right)=-\alpha_{1}\frac{\KO}{2}\norm{\errorvarquat(t)}^{2},\label{eq:Derivative of V1 (inequality)}
\end{equation}
and, therefore, 
\begin{align*}
\dot{V}(\errorvardq\left(t\right)) & \leq-\alpha_{1}\frac{\KO}{2}\norm{\errorvarquat(t)}^{2}-2\alpha_{2}\KT\norm{\errorvarquat'(t)}^{2},
\end{align*}
which in turn yields \eqref{eq:proof:Lyapunov Derivative 1}.
\begin{rem}
\label{rem:addressing_the_unwinding}To address the interval $\eta\in[-1,0]$
and prevent the problem of unwinding \citep{Kussaba2017}, one must
assume $\errorvardq=1+\tilde{\dq x}$ instead of \eqref{eq:error:NonDisturbed:new metrics}.
Hence, without loss of generality, the exact same controller from
Theorem~\ref{THEOREM_TRACKING} yields \eqref{eq:proof:Lyapunov Derivative 1}
with $\norm{\errorvarquat(t)}^{2}=\left(1{+}\eta\right)^{2}{+}\norm{\myvec{\mu}}^{2}$
where $\eta=-1$ is the equilibrium.\footnote{One must only observe that $\teorientation{\errorvardq}=\myvec{\mu}$
when $\errorvardq=1+\tilde{\dq x}$, and the inequality $\left(1+\eta\right)^{2}\leq1-\eta^{2}$
holds when $\eta\in\left[-1,0\right]$ .}
\end{rem}
Now, if we explicitly regard the influence of $\noiseadd$ and $\noisepose$,
the Lyapunov derivative \eqref{eq:Lyapunov derivative (V1)}-\eqref{eq:Lyapunov derivative (V2)}
yields\footnote{Recall that $\tetranslation{\errorvardq}=2\left(\eta\myvec{\mu}'-\eta'\myvec{\mu}+\myvec{\mu}\times\myvec{\mu}'\right)$
and $\teorientation{\errorvardq}=-\quat{\mu}$.}
\begin{align}
\dot{V}(\errorvardq\left(t\right)) & =-\alpha_{1}\inner{\teorientation{\errorvardq}}{\KO\teorientation{\errorvardq}{+}\noiseaddrotation{+}\noiseposerotation}\nonumber \\
 & \qquad+\frac{\alpha_{2}}{2}\inner{\tetranslation{\errorvardq}}{-\KT\tetranslation{\errorvardq}+\noiseaddposition{+}\noiseposeposition},\label{eq:derivative_lyapunov_with_disturbance}
\end{align}
which is equivalent to \eqref{eq:proof:Lyapunov Derivative with V}.

\section{Decoupled controller\label{sec:controllers}}

Given $\dq x=\quat r+\left(1/2\right)\dual\quat p\quat r$ and $\dq x_{d}=\quat r_{d}+\left(1/2\right)\dual\quat p_{d}\quat r_{d}$,
the control input is 
\begin{align*}
\dot{\quat q} & \negmedspace=\negmedspace\mymatrix J_{\mathrm{dec}}^{+}\kappa\negmedspace\left[\negmedspace\begin{array}{c}
\vectorquat\left(\quat p_{d}-\quat p\right)\\
\mathrm{vec}_{4}\left(1-\quat r^{*}\quat r_{d}\right)
\end{array}\negmedspace\right]\negmedspace, & \text{with }\mymatrix J_{\mathrm{dec}} & \negmedspace=\negmedspace\begin{bmatrix}\mymatrix J_{p}\\
\mymatrix N_{R_{4}}
\end{bmatrix},
\end{align*}
where $\mathrm{vec}_{4}\,:\,\mathcal{\mathbb{H}}\to\mathbb{R}^{4}$
is analogous to $\vectorquat$ and the velocity satisfies $\vectorquat\dot{\quat p}=\mymatrix J_{p}\dot{\myvec q}$.
The matrix $\mymatrix N_{R_{4}}$ corresponds to the four upper rows
of $\mymatrix N_{\!R_{8}}$, in which $\mymatrix N_{\!R_{8}}=\smash{\hamilton -{\dq x_{d}}}\mymatrix C_{8}\mymatrix J_{\!R_{8}}$,
with $\mymatrix C_{8}=\mathrm{diag}\left(1,-1,-1,-1,1,-1,-1,-1\right)$,
and $\mymatrix J_{\!R_{8}}$ is the Jacobian matrix that satisfies
$\mathrm{vec}_{8}\dot{\dq x}=\mymatrix J_{\!R_{8}}\dot{\myvec q}$,
where $\mathrm{vec}_{8}\,:\,\mathcal{H}\to\mathbb{R}^{8}$ is analogous
to $\vector$.

\bibliographystyle{elsarticle-harv}
\bibliography{root}

\end{document}